\DeclareMathOperator*{\argmin}{argmin}
\newtheorem{prop}{Proposition}
\newcommand{\SECM}{\textsf{Soft-ECM}\xspace}
\newcommand{\dashrule}[1][black]{%
  \color{#1}\rule[\dimexpr.5ex-.2pt]{4pt}{.4pt}\xleaders\hbox{\rule{4pt}{0pt}\rule[\dimexpr.5ex-.2pt]{4pt}{.4pt}}\hfill\kern0pt%
}
\newcommand{\rulecolor}[1]{ %
  \def\CT@arc@{\color{#1}}%
}
\newcommand*\widebar[1]{%
  \hbox{%
    \vbox{%
      \hrule height 0.5pt 
      \kern0.4ex
      \hbox{%
        \kern-0.0em
        \ensuremath{#1}%
        \kern-0.0em
      }%
    }%
  }%
} 
\begin{document}

\title{\SECM: An extension of Evidential C-Means\\
for complex data\footnote{This article has been published in the proceedings of the IEEE conference on Fuzzy Systems (Fuzz), 2025}}

\author{\textbf{Armel Soubeiga}, \textit{Clermont Auvergne INP, ENSM} \\
\textit{St Etienne, UMR 6158 CNRS, LIMOS,}  \\
\textit{armel.soubeiga@uca.fr}\\
\textbf{Thomas Guyet}, \textit{AIStroSight/Inria, UCBL, HCL}\\
\textit{thomas.guyet@inria.fr}\\
\textbf{Violaine Antoine}, \textit{Clermont Auvergne INP, ENSM}\\
\textit{St Etienne, UMR 6158 CNRS, LIMOS}  \\
\textit{violaine.antoine@uca.fr}
}
\date{}
\maketitle

\begin{abstract}
Clustering based on belief functions has been gaining increasing attention in the machine learning community due to its ability to effectively represent uncertainty and/or imprecision. However, none of the existing algorithms can be applied to complex data, such as mixed data (numerical and categorical) or non-tabular data like time series. Indeed, these types of data are, in general, not represented in a Euclidean space and the aforementioned algorithms make use of the properties of such spaces, in particular for the construction of barycenters. 
In this paper, we reformulate the \textit{Evidential C-Means} (ECM) problem for clustering complex data. We propose a new algorithm, \SECM, which consistently positions the centroids of imprecise clusters requiring only a semi-metric. Our experiments show that \SECM present results comparable to conventional fuzzy clustering approaches on numerical data, and we demonstrate its ability to handle mixed data and its benefits when combining fuzzy clustering with semi-metrics such as DTW for time series data.

\noindent\underline{keywords:} Evidential clustering, credal partition, time series data, optimization
\end{abstract}

\section{Introduction}
In numerous real-world applications, only partial information is available for objects, and in these cases hard clustering can result in poor accuracy. To address this problem and capture the degree of ambiguity regarding the class membership of the objects, soft clustering variants have been proposed, including Fuzzy C-Means\cite{softecm13} and Evidential C-Means\cite{softecm00}. These variants allow to describe the uncertainty and/or imprecision in the partition. 
Fuzzy C-Means (FCM)~\cite{softecm13} is based on fuzzy set theory, while Evidential C-Means (ECM) clustering is based on belief function theory~\cite{softecm00}. ECM allows objects to belong not only to singleton clusters of the solution set $\Omega$, but also to any subset of $\Omega$ (i.e. \textit{meta-clusters}) with different belief masses. This additional flexibility over FCM provides a deeper insight into the data and improves robustness against outliers. 
However, ECM was originally designed for tabular data  and it handles only quantitative attributes, i.e. data in a vector space. %
Indeed, ECM is based on Euclidean distance: the problem formulation assumes the capability to compute barycenters as a mean position of cluster elements. However, this is not always feasible. 
In order to adapt ECM to different types of data, variants have been proposed. Among these variants, ECMdd~\cite{softecm05} and RECM~\cite{softecm14} use relational similarity measures, while CECM~\cite{softecm15} introduces clustering constraints to better guide data partitioning and employs a specific Mahalanobis distance for each cluster. ECM+~\cite{albert2025ecm+} improves the center definition when Mahalanobis distances are used. Other variants, such as CatECM~\cite{softecm06}, are designed for categorical data. 
Credal C-Means (CCM)~\cite{softecm18} and Belief C-Means (BCM)~\cite{softecm17} modify how objects are assigned to meta-clusters. These approaches reconsider the computation of cluster centers by considering distances between objects and clusters, avoiding the arithmetic mean approximation, which improves accuracy for highly uncertain data.
Furthermore, some data types are not tabular, such as time series for which hard clustering methods have already been adapted~\cite{petitjean2014dynamic}. To the best of our knowledge, no evidential clustering methods have yet been proposed to handle such data. 

In this paper, we propose \SECM, a new algorithm that reformulates the ECM problem as a constrained multi-class clustering, where singleton clusters and meta-clusters (imprecise clusters) are individually optimized. In contrast to the classical ECM approach, meta-clusters are here treated like singleton clusters, allowing greater flexibility in dealing with uncertainties. 
A new constraint on meta-cluster barycenters ensures that they are positioned consistently with those of singleton clusters. In this way, \SECM is more flexible, and allows the use of non-Euclidean dissimilarity measures. This makes it possible to deal with complex data, such as mixed data (tabular data with both quantitative and qualitative features) or even non-tabular data such as time series.

\section{Background of Evidential C-Means (ECM)}\label{etat-art}
ECM is an algorithm based on the notions of uncertainty and imprecision defined in the theory of belief functions. This theory, also known as Dempster-Shafer theory~\cite{shafer1976mathematical}, provides powerful tools for modeling partial information. Considering $\omega$ as a variable defined in finite set $\Omega = \left \{ \omega_1, \dots, \omega_c \right \}$, partial knowledge of the true value of $\omega$ can be represented by a mass function $m$, defined from power set $2^{\Omega}$ to $\left[ 0, 1 \right]$ and satisfying: $\sum_{A \subseteq \Omega} m(A) = 1$. The quantity $m(A)$ is then interpreted as the amount of belief assigned to any subset $A \subseteq \Omega$, and $A$ is called a focal element if $m(A) > 0$. When $m(A) = 1$, we have a certain belief. When $m(A) > 0$ and $|A| > 1$, we have an imprecise belief.

For the ECM clustering problem \cite{softecm00}, Masson and Den{\oe}ux consider $\bm{X} = \left\{ \bm{x}_1, \dots, \bm{x}_n \right \}$ a set of $n$ individuals described in $\mathbb{R}^{p}$ to be grouped into a set of $c$ ``pure'' or ``singleton'' clusters, denoted $\Omega = \left\{ \omega_1, \dots, \omega_c \right \}$. 
The result of ECM is then the assignment of each example $\bm{x}_i\in\bm{X}$ to a subset $A \subseteq \Omega$ by a mass $m_i(A)\in\mathbb{R}_+$. When $A$ has a cardinality $|A|>1$, an example is classified imprecisely to all clusters of this set. In this case, $A$ is designated as a \textit{meta-cluster}.

ECM minimizes the following intra-cluster inertia:\footnote{Refer to the paper \cite{softecm00} for the minimization scheme.}
\begin{equation}\label{eq:ECMfunction}
    \begin{aligned}
    J_{ECM}(\bm{M},\bm{\mathcal{V}}) = & \sum_{i=1}^{n}\sum_{A\subseteq \Omega \setminus \emptyset} {\left | A \right |}^{\alpha} m_{i}(A)^{\beta} d^{2}(\bm{x}_i, \overline{\bm{v}}_A) + \sum_{i=1}^{n} \delta^{2} m_{i}(\emptyset)^{\beta}, \\
    \text{subject to} & \sum_{A \subseteq \Omega} m_{i}(A) = 1 \quad \forall i \in [n].
    \end{aligned}
\end{equation}
with $\bm{M}=\left\{m_i(A) \,\mid\, i\in [n],\,A\subseteq\Omega\right\}$ the credal partition, $\bm{\mathcal{V}}=\{\bm{v}_{\omega_j}\}_{j\in[c]}$ singleton clusters centroids, $\{\overline{\bm{v}}_{A}\}_{A\subseteq\Omega}$ the centroids of clusters and meta-clusters; and $d$ the Euclidean distance.

Fig.~\ref{fig:ECM} illustrates a soft clustering, ECM considers only the green dashed lines in its optimization problem and the position of the meta-cluster is computed as the barycenter of the centroids of singleton clusters. 

\begin{figure}[htbp]
\centering
\includegraphics[width=.8\columnwidth]{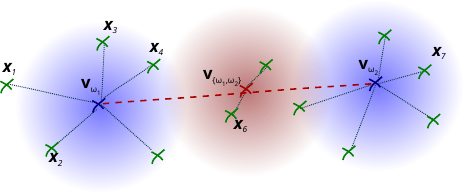}
\caption{Illustration of a 2-cluster evidential clustering. Examples are represented in green. Points in blue $\bm{v}_{\omega_1}$ and $\bm{v}_{\omega_2}$ indicate cluster centers, ${\omega_1}$ and ${\omega_2}$. The red point $\bm{v}_{\{\omega_1,\omega_2\}}$ is the center of meta-cluster $\{\omega_1,\omega_2\}$. This point is both the barycenter of examples belonging in the meta-cluster $\{\omega_1,\omega_2\}$; and the barycenter of $\bm{v}_{\omega_1}$ and $\bm{v}_{\omega_2}$. The fine dotted lines indicate the distances considered in computing the new clustering inertia.}
\label{fig:ECM}
\end{figure}

The hyperparameter $\beta>0$ controls the spread of the final partition: as the value of $\beta$ increases, the mass functions of the partition have degrees of belief distributed over all subsets $A \subseteq \Omega$. The coefficient $|A|^\alpha$ controls the imprecision of the credal partition. If $\alpha >0$ is large, then subsets with high cardinalities are penalized more than singletons. Thus, the degrees of belief are increased for singletons and decreased for other subsets. Finally, the hyperparameter $\delta$ corresponds to the minimal atypicity distance. When the distance between an example and the empty set is above this threshold, it is considered atypical.

It is worth noting that, in ECM, the centroids of the meta-clusters are not in $\bm{\mathcal{V}}$. The centroid of a meta-cluster $A \subseteq \Omega$, $|A|>1$, is computed as the isobarycenter of the centroids associated with the singleton clusters:
\begin{equation}\label{eq:barycentreECM}
\overline{\bm{v}}_A = \frac{1}{\left | A \right |}\sum_{\omega_k\in A} \mathbf{v}_{\omega_k}.
\end{equation}
This definition of a barycenter is correct when the data are in a Euclidean space. Nonetheless, it cannot be use for complex data such as categorical, non-numerical or non-tabular data, for which we might wish to use non-euclidean dissimilarity measures.\footnote{The properties of Euclidean distance in required by ECM to define both the function to be optimized~\eqref{eq:ECMfunction} and the meta-cluster centroids~\eqref{eq:barycentreECM}.}

\section{Soft Evidential C-Means (\SECM) Approach}\label{soft-ecm}
We propose to reformulate the ECM problem in a more generic way, so that the evidential clustering problem can be solved with dissimilarity measures other than Euclidean distance. Intuitively, our approach relaxes the definition of meta-cluster centroid as the explicit isobarycenter of singletons' centroids. In Fig.~\ref{fig:ECM}, the meta-cluster centroid $\bm{v}_{\{A,B\}}$ is constrained by both being as close as the barycenter of its examples (green dashed lines) and being as close as barycenter of the singletons clusters centroids (red dashed lines).

In this section, $\bm{X}=\{\bm{x}_1, \dots , \bm{x}_n\}$ denotes a set of objects in $\mathbb{X}$ (e.g., time series) and $d$ denotes a semi-metric between these objects.\footnote{A semi-metric is a symmetric, positive and defined dissimilarity measure, but it does not necessarily hold the triangular inequality.} In a classical way~\cite{petitjean2014dynamic}, we can then define the barycenter of a set of objects as an element of $\mathbb{X}$ that minimizes inertia.
More formally, the barycenter, $\mathbf{v}_E$, of the subset $E\subseteq\bm{X}$ is defined as follows:
\begin{equation}\label{eq:barycentresoftECM}
\mathbf{v}_E = \argmin_{\bm{x}\in\mathbb{X}}\sum_{\bm{e}\in E}d(\bm{e}, \bm{x}).
\end{equation}

Then, the \SECM problem is formalized as the following constrained minimization problem:
\begin{equation}\label{eq:SoftECMfunction}
\begin{aligned}
 \min_{\bm{M},\bm{\mathcal{V}}} J^{\alpha,\beta,\delta}(\bm{M},\bm{\mathcal{V}})  = & \sum_{i=1}^n \sum_{A \subseteq \Omega\setminus\emptyset} |A|^\alpha m_{i}(A)^\beta d(\mathbf{x}_i,\mathbf{v}_A) + \sum_{i=1}^n \delta^2 m_{i}(\emptyset)^\beta\\ 
 \text{subject to} & \sum_{A \subseteq \Omega} m_{i}(A) = 1,\; \forall i \in [n],\\
 & \mathbf{v}_A = \argmin_{\mathbf{x}\in\mathbb{X}}\sum_{\omega_k\in A}d(\mathbf{v}_{\{\omega_k\}}, \mathbf{x}),\, \forall A\subseteq\Omega.
\end{aligned}
\end{equation}

In this formulation, $\bm{M} = \left(m_i\left(A\right)\right)_{i\in[n]}$ represents the mass functions on a focal element~$A \subseteq \Omega$, while $\bm{\mathcal{V}} = \{\mathbf{v}_A\}_{A\subset\Omega\setminus\emptyset}$ corresponds to the barycenters of all singleton clusters and meta-clusters. In contrast to ECM formulation \eqref{eq:ECMfunction}, which focused on the optimization of cluster positions only and defined the positions of meta-clusters by~\eqref{eq:barycentreECM}, here we optimize both. 
The second constraint enforces the centroid of each meta-cluster $A$ to be the barycenter of the cluster centroids ($\bm{v}_{\omega_k}$ where $\omega_k\in A$). We find again the expression of the equation that defines the barycenter (see~\eqref{eq:barycentresoftECM}).

\begin{prop}\label{res:equivalence}
Provision $\mathbb{X}=\mathbb{R}^p$ and $d$ is the squared Euclidean distance, the problems of \eqref{eq:ECMfunction} and \eqref{eq:SoftECMfunction} are equivalent.
\end{prop}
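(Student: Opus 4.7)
The plan is to check that, under the two hypotheses $\mathbb{X}=\mathbb{R}^p$ and $d(\bm{x},\bm{y})=\|\bm{x}-\bm{y}\|^2$, the Soft-ECM problem \eqref{eq:SoftECMfunction} collapses term-by-term onto the ECM problem \eqref{eq:ECMfunction}. Three pieces need to be matched: (i) the constraints on the mass functions, (ii) the meta-cluster centroids, and (iii) the objective functional itself.

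First I would observe that the normalization constraint $\sum_{A\subseteq\Omega} m_i(A)=1$ appears verbatim in both formulations, so (i) is immediate and carries over with no work. Next, for (ii), I would fix an arbitrary meta-cluster $A\subseteq\Omega$ with $|A|>1$ and evaluate the second Soft-ECM constraint under the assumption that $d$ is the squared Euclidean distance. The map $\bm{x}\mapsto \sum_{\omega_k\in A}\|\bm{v}_{\{\omega_k\}}-\bm{x}\|^2$ is strictly convex and coercive on $\mathbb{R}^p$; its gradient vanishes at a unique point, which, by the standard first-order computation, is exactly the arithmetic mean $\frac{1}{|A|}\sum_{\omega_k\in A}\bm{v}_{\{\omega_k\}}$. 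This is precisely $\overline{\bm{v}}_A$ from \eqref{eq:barycentreECM}, so the Soft-ECM constraint reduces, as an equality, to the ECM definition of meta-cluster centroid.

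For (iii), once the centroids agree, I would substitute back into the two objectives. The penalty term $\sum_i \delta^2 m_i(\emptyset)^\beta$ for the empty focal element is identical in both problems. In the main sum, the ECM formulation uses $d^2(\bm{x}_i,\overline{\bm{v}}_A)$ with $d$ denoting the Euclidean distance, while Soft-ECM uses $d(\bm{x}_i,\bm{v}_A)$ with $d$ being the squared Euclidean distance; by our hypothesis these two quantities coincide, and combined with $\bm{v}_A=\overline{\bm{v}}_A$ established in the previous step, each summand matches its counterpart. Consequently $J^{\alpha,\beta,\delta}(\bm{M},\bm{\mathcal{V}})=J_{ECM}(\bm{M},\bm{\mathcal{V}})$ on the common feasible set, so the two constrained minimization problems admit the same optimizers and optimal value.

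I do not anticipate a real obstacle: the argument is essentially a bookkeeping verification plus the one-line fact that the Fréchet mean under squared Euclidean distance is the arithmetic mean. The only subtle point worth stating explicitly is that the $\argmin$ defining $\bm{v}_A$ is a genuine (unique) minimizer in $\mathbb{R}^p$, which follows from strict convexity; this guarantees that the Soft-ECM constraint is non-ambiguous and makes the reduction to~\eqref{eq:barycentreECM} an equality rather than merely an inclusion.
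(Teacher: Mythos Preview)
Your argument is correct and rests on the same key observation as the paper's: under the squared Euclidean distance on $\mathbb{R}^p$, the $\argmin$ constraint in \eqref{eq:SoftECMfunction} is uniquely solved by the arithmetic mean, hence collapses to \eqref{eq:barycentreECM}. Where you diverge is in the packaging of the conclusion. You argue by direct substitution: once the constraint pins each $\bm{v}_A$ to $\overline{\bm{v}}_A$, the two objectives coincide summand by summand, so the problems share optimizers. The paper instead introduces an explicit pair of maps $\widebar{\cdot}$ and $\underline{\cdot}$ between the singleton-centroid space $\mathbb{R}^{p\times c}$ of ECM and the full-centroid space $\mathbb{R}^{p\times 2^c}$ of Soft-ECM, records the identities $J_{ECM}(\bm{M},\bm{\mathcal{V}})=J^{\alpha,\beta,\delta}(\bm{M},\widebar{\bm{\mathcal{V}}})$ and its converse, and then derives the equivalence of minimizers by a short contradiction argument in each direction. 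Your route is more economical; the paper's is more explicit about the fact that the two $\bm{\mathcal{V}}$'s live in spaces of different dimension, a point your phrase ``on the common feasible set'' leaves implicit. If you want to tighten your write-up, it would suffice to say that the Soft-ECM feasible set, after imposing the second constraint, is parametrized by $(\bm{M},\{\bm{v}_{\{\omega_k\}}\}_{k\in[c]})$ alone, which is exactly the ECM parameter space.
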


\begin{proof}
From the definition of the notion of barycenter in Equation~\ref{eq:barycentresoftECM}, and in the case where $d$ is the Euclidean distance, we have the equivalence between:
\begin{enumerate}
\item $\mathbf{v}_A = \argmin_{x\in\mathbb{X}}\sum_{\omega_k\in A}d(\mathbf{v}_{\{\omega_k\}}, x)$ (inertia minimization),
\item $\mathbf{v}_A$ is the centroid of clusters or meta-clusters (by the properties of barycenters).
\end{enumerate}

\vspace{5pt}

We now define the mapping $\widebar{\cdot}: \mathbb{R}^{p\times c} \mapsto \mathbb{R}^{p\times 2^c}$ which constructs a set of centroids for all clusters and meta-clusters from the centroids of pure clusters.
Its inverse, denoted $\underline{\cdot}: \mathbb{R}^{p\times 2^c} \mapsto \mathbb{R}^{p\times c}$, selects only the centroid corresponding to pure a cluster from the set of all centroids. 
Then, for any $\bm{M}$ and $\bm{\mathcal{V}}$, we have:
\begin{equation}\label{preuve:eq:transfo}
\begin{array}{l}
J_{ECM}(\bm{M},\bm{\mathcal{V}}) =J^{\alpha,\beta,\delta}\left(\bm{M},\widebar{\bm{\mathcal{V}}}\right)\\
J^{\alpha,\beta,\delta}\left(\bm{M},\bm{\mathcal{V}}\right)=J_{ECM}\left(\bm{M},\underline{\bm{\mathcal{V}}}\right) \\
\end{array}
\end{equation}

This simply expresses the fact that, although the parameters of $J_{ECM}$ only take into account the centroids of pure clusters, the objective function actually involves the centroids of both pure clusters and meta-clusters (when computed using the Euclidean distance).

It is then easy to verify that $\widebar{\underline{\mathcal{V}}}=\mathcal{V}$.

\vspace{5pt}

Having introduced these notions, we can now prove the proposition above by contradiction.

\textbf{Let us start by proving that \eqref{eq:ECMfunction} $\Leftarrow$ \eqref{eq:SoftECMfunction}.} Suppose that $\bm{M}^*$ and $\bm{\mathcal{V}}^*$ are the optimum of $J^{\alpha,\beta,\delta}(\bm{M},\bm{\mathcal{V}})$ as defined in Equation~\ref{eq:SoftECMfunction}, and assume that there exists a better solution than $\left(\bm{M}^*,\underline{\bm{\mathcal{V}}^*}\right)$ for minimizing $J_{ECM}$. Let $\left(\bm{M}',\bm{\mathcal{V}}'\right)$ denote such a solution, i.e.,
$J_{ECM}\left(\bm{M}',\bm{\mathcal{V}}'\right)<J_{ECM}\left(\bm{M}^*,\underline{\bm{\mathcal{V}}^*}\right).$
Then, using the equalities from (\ref{preuve:eq:transfo}) on both sides of the inequality, we obtain:
$J^{\alpha,\beta,\delta}\left(\bm{M}',\widebar{\bm{\mathcal{V}}}'\right)<J^{\alpha,\beta,\delta}\left(\bm{M}^*,\widebar{\underline{\bm{\mathcal{V}}}}^*\right),$
and therefore:
$J^{\alpha,\beta,\delta}\left(\bm{M}',\widebar{\bm{\mathcal{V}}}'\right)<J^{\alpha,\beta,\delta}\left(\bm{M}^*,\bm{\mathcal{V}}^*\right).$

Moreover, we can verify that $\left(\bm{M}',\widebar{\bm{\mathcal{V}}}'\right)$ is a valid solution of Equation~\ref{eq:SoftECMfunction}. Indeed:
The constraints on the sum of masses are trivially satisfied since $\left(\bm{M}',\bm{\mathcal{V}}'\right)$ was a solution of Equation~\ref{eq:ECMfunction}, which includes these constraints on $\mathcal{M}$.
The second set of constraints is also satisfied, because by construction of $\widebar{\bm{\mathcal{V}}}'$, the additional elements are barycenters of the pure clusters, and in the Euclidean case, such barycenters are the points that minimize inertia (see above).

Thus, $\left(\bm{M}',\widebar{\bm{\mathcal{V}}}'\right)$ would be a better solution than the supposed optimum of Soft-ECM, which is impossible. We conclude that there exists no better solution than $\left(\bm{M}^*,\underline{\bm{\mathcal{V}}^*}\right)$ for $J_{ECM}$.

\vspace{10pt}

\textbf{Let us new prove that \eqref{eq:ECMfunction} $\Rightarrow$ \eqref{eq:SoftECMfunction}.} The reverse direction can be shown symmetrically.

\vspace{10pt}

We thus conclude that the two minimization problems are equivalent in the Euclidean setting. \\\null\hspace{14.8cm}$\square$
\end{proof}

\section{Optimization Scheme for \SECM}
The \SECM problem is a bi-level optimization problem~\cite{dempe2020bilevel}, i.e. an optimization problem in which parameters are themselves defined by another optimization problem.

We propose a heuristic that relaxes the hard constraint of identifying the meta-clusters barycenters into an additional numerical constraint in the global minimization function. The relaxed minimization problem of \SECM then becomes:
\begin{equation}\label{eq:softECMrelaxed}
    \begin{split}
        \min_{\bm{M},\bm{\mathcal{V}}} &\; \sum_{i=1}^n \sum_{A \subseteq \Omega \setminus \emptyset} |A|^\alpha m_{i}(A)^\beta d(\mathbf{x}_i,\mathbf{v}_A) 
        + \sum_{i=1}^n \delta^2 m_{i}(\emptyset)^\beta  \\
        &+ \lambda \sum_{A \subseteq \Omega \setminus \emptyset} \sum_{\omega_k \in A} d(\mathbf{v}_{\{\omega_k\}}, \mathbf{v}_A), \\
        \text{subject to} &\; \sum_{A \subseteq \Omega} m_{i}(A) = 1, \quad \forall i \in [n].
    \end{split}
\end{equation}
While the first and second terms are similar to those of ECM, the third term imposes the additional constraint of consistency between the barycenters of the $(\{\omega_{k}\})_{k\in[c]}$ clusters and those of meta-clusters. The new hyperparameter~$\lambda\geq 0$ adjusts the importance given to this constraint. The higher $\lambda$, the more the solution respects the inter-clusters positioning constraint.

We can use an alternating optimization scheme to solve the relaxed \SECM optimization problem (see Algorithm~\ref{algorithm:algo1}), with the following two steps alternating (until convergence or a fixed number of iterations):

\textbf{1) Optimization of $\bm{M}$, with $\bm{\mathcal{V}}$ fixed.} 
The credal partition $\bm{M}$, which minimizes the objective function, is computed using the expression below:
\begin{equation}\label{eq:partitionM}
    \begin{aligned}
    & m_i(A) = \frac{|A|^{-\frac{\alpha}{\beta-1}} d\left(\mathbf{x}_i,\mathbf{v}_A\right)^{-\frac{1}{\beta-1}}}{\sum_{B\neq \emptyset} |B|^{-\frac{\alpha}{\beta-1}} d\left(\mathbf{x}_i,\mathbf{v}_B\right)^{-\frac{1}{\beta-1}} + \delta^{-\frac{2}{\beta-1}}}, \\
    & m_i(\emptyset) = 1-\sum_{A\neq \emptyset}m_i(A).
    \end{aligned}
\end{equation}
This solution was proposed for ECM~\cite{softecm00} and is still valid for \SECM. Indeed, the term added to the objective function does not depend on $\bm{M}$, so the optimization with respect to $\bm{M}$ of the problem in~\eqref{eq:softECMrelaxed} is the same as the problem in~\eqref{eq:ECMfunction}.

\textbf{2) Optimization of $\bm{\mathcal{V}}$, with $\mathbf{M}$ fixed.} 
The optimization of~\eqref{eq:softECMrelaxed} when $\bm{M}$ is fixed is equivalent to the following unconstrained optimization problem:
\begin{equation}\label{eq:SECMwrtV}
 \argmin_{\mathcal{V}} \sum_{A \subseteq \Omega\setminus\emptyset}\left(\sum_{i=1}^n |A|^\alpha m_{i}(A)^\beta d(\mathbf{x}_i,\mathbf{v}_A) + \lambda \sum_{\omega_k\in A}d(\mathbf{v}_{\{\omega_k\}}, \mathbf{v}_A)\right).
\end{equation}

We adopt a different approach from ECM and use a numerical method (gradient descent) to find an approximate solution. The use of optimization tools based on \textit{automatic differentiation} allows to solve this problem efficiently. We note that the optimization of this function requires the differentiability of~$d$.

\SECM converges after a finite number of iterations as $J^{\alpha,\beta,\delta,\lambda}$ decreases at each step and is a positive function. 
Indeed, updating the credal partition $\mathbf{M}$ with Lagrange multipliers does not increase the objective function $J^{\alpha,\beta,\delta,\lambda}$~\cite{softecm00}. 
Regarding the optimization with respect to $\bm{\mathcal{V}}$, the gradient descent ensure to not increase the objective function.

\begin{algorithm}[htbp]
\SetAlgoLined
\KwIn{$\left \{ \mathbf{x}_1, \dots , \mathbf{x}_n \right \}$: $n$ objects in $\mathbb{X}^{n}$.}
\Parameters{ \\       
    \hspace*{0.8em}
    \begin{minipage}[t]{\linewidth}
        $c$: number of clusters \\
        $\alpha>0, \beta > 1$: weightings for belief \\
        $\delta > 0$: threshold for outliers \\
        $\lambda>0$: consistency importance between barycenters\\
        $\epsilon, \xi > 0$: convergence thresholds \\
        $\rho$: learning rate
    \end{minipage}}
 \vspace{.8em}
 \Initialize{ \\
 \hspace*{0.8em}
    \begin{minipage}[t]{\linewidth}
         $\mathbf{v}^{0}$: Initialize $c$ prototypes randomly \\
         $\mathbf{v}^{0}_{A}$: Compute the mean of $\mathbf{v}^{0}$ for $\left | A \right | > 1$  
    \end{minipage}}
 \vspace*{0.8em}
 $t \gets 0$ \\
 \Do{$\|\mathbf{M}^{t} - \mathbf{M}^{t-1}\| < \epsilon$}{
    (1) Compute the credal partition $\mathbf{M}$ using \eqref{eq:partitionM}\;
    (2) Update the prototype vector: \\
    $\theta \gets 0$ \\
    \Do{$\|\mathcal{V}^{\theta+1} - \mathcal{V}^{\theta}\| < \xi$}{
         $\mathcal{V}^{\theta+1} = \mathcal{V}^{\theta} - \rho \nabla_{\mathcal{V}} J^{\alpha,\beta,\delta,\lambda}(\mathbf{M},\mathcal{V}^{\theta})$\;
         $\theta \gets  \theta + 1$\;
     }
 $t \gets t + 1$\;
 }
 \vspace*{0.8em}
 \KwOut{Optimal solution $\mathbf{M}$ and $\mathcal{V}$.}
 \caption{\SECM Algorithm}
 \label{algorithm:algo1}
\end{algorithm}

\section{Experiments}\label{empirical-results}
In this section, we present the results of several experiments. First, we demonstrate the ability of \SECM to perform clustering comparable to ECM on a classic toy problem. Next, we compare various fuzzy clustering methods with our approach using real-world datasets of different types, including tabular data (numerical and categorical) and time series. Finally, we showcase the clustering of synthetic time series to illustrate how our approach addresses the challenges of fuzzy clustering on complex data with a semi-metric.

\subsection{Experiments with Synthetic Data}\label{sec:expe_donnees-synthetic}
We compare the behavior of \SECM and ECM on a classic dataset, \textit{Diamond}~\cite{softecm00}. It consists of twelve objects (see Fig.~\ref{fig:exp1_1}), where objects 1~to 11~being normal, and object 12 is an outlier. \SECM and ECM were run with the same parameters: $\alpha = \frac{1}{6}$, $\beta = 2$, $\delta = 11$ and, for \SECM, $\lambda = 1.5$. The data were partitioned into $c = 2$ clusters. Fig.~\ref{fig:exp1_2} and Fig.~\ref{fig:exp1_3} depict the $2^c$ cluster masses (focal elements) ---$m(\omega_1)$, $m(\omega_2)$, $m(\Omega)$, and $m(\emptyset)$--- for ECM and \SECM, respectively.

\begin{figure*}[tbp]
    \centering 
    \begin{subfigure}[b]{0.48\textwidth}
        \includegraphics[width=\textwidth]{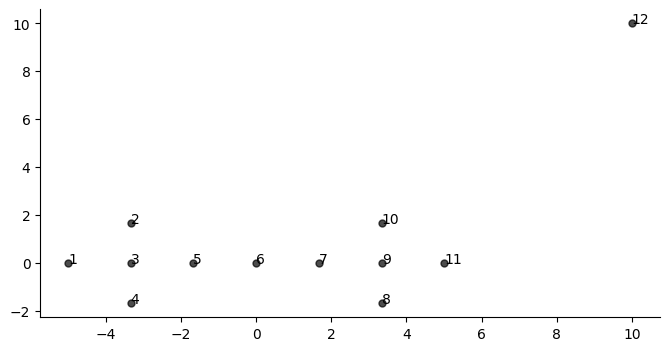}
        \caption{Données initiales}
        \label{fig:exp1_1}
    \end{subfigure}
    \hfill
    \begin{subfigure}[b]{0.48\textwidth}
        \includegraphics[width=\textwidth]{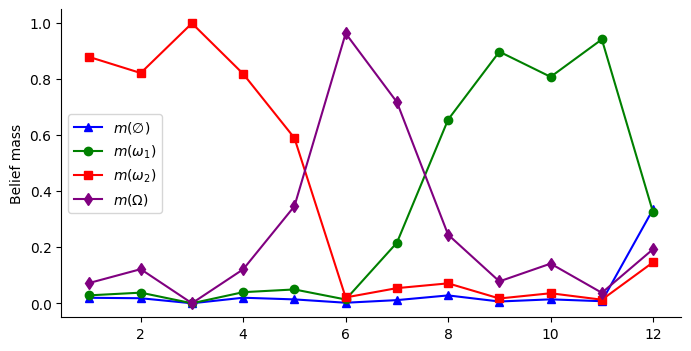}
        \caption{ECM}
        \label{fig:exp1_2}
    \end{subfigure}

    \begin{subfigure}[b]{0.48\textwidth}
        \includegraphics[width=\textwidth]{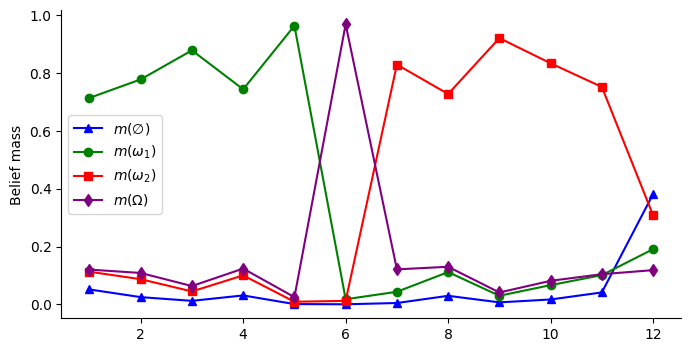}
        \caption{\SECM ($\lambda=1.5$)}
        \label{fig:exp1_3}
    \end{subfigure}
    \hfill
    \begin{subfigure}[b]{0.48\textwidth}
        \includegraphics[width=\textwidth]{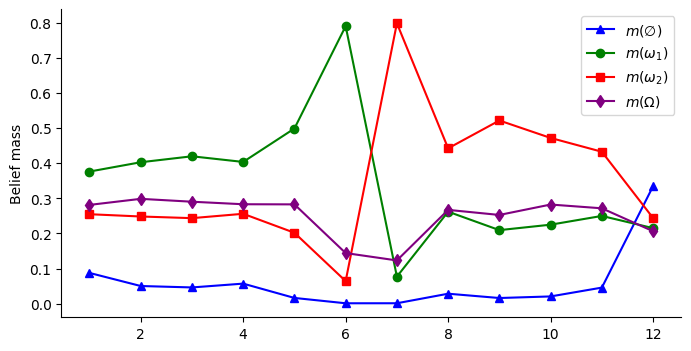}
        \caption{\SECM ($\lambda=3.5$)}
        \label{fig:exp1_4}
    \end{subfigure}
    \caption{Illustration of the Diamond dataset clustering \cite{softecm00}. (a) illustration of raw data. (b), (c) and (d) mass distributions for each example for ECM and \SECM. A high mass indicates cluster or meta-cluster membership.}
    \label{fig:exp1}
\end{figure*}

Objects 1 to 5 and 8 to 11 are assigned to two distinct clusters by ECM and \SECM, while object~12 is identified as an outlier. For objects~6 and~7, ECM assigns most of the mass to the ignorance (meta-cluster) $\Omega$, as they are located near the centroid of $\Omega$, which is defined as the average of the centroids of $\omega_1$ and $\omega_2$ (see Fig.~\ref{fig:exp1_2}). In contrast, \SECM optimizes this centroid more effectively, with most belief mass assigned to $\omega_2$ for object 7 (cf. Fig.~\ref{fig:exp1_3}). When we increase $\lambda$, until $3.5$ with this dataset, it further constrains inter-cluster positioning. As shown in Fig.~\ref{fig:exp1_4}, object 6 was then classified into the $\omega_1$, consistent with the results obtained by the authors in (BCM)~\cite{softecm17} and aligned the raw data.

\subsection{Experiments and Comparisons with Real-World Data}\label{sec:expe_donneesreels}
\subsubsection{Experimental Setting}\label{sec:expe_setting}
We evaluate here the performance of \SECM against ECM and its state-of-the-art variants, using nine publicly available real-world datasets.\footnote{\url{https://archive.ics.uci.edu}, \url{https://www.timeseriesclassification.com/}} The datasets are of different types (numerical, categorical or time series). Tab.~\ref{tab:data} summarizes the characteristics of the datasets.

\begin{table}[tb]
\renewcommand{\arraystretch}{1.05}
\begin{center}
\begin{tabular}{l@{\hskip 0.08in}c@{\hskip 0.05in}c@{\hskip 0.05in}c@{\hskip 0.1in}c@{\hskip 0.05in}c@{\hskip 0.1in}c}
\toprule
\textbf{Data} & $n$ & $p$ & $c$ & \textbf{Length} & \textbf{Metric(s)} & \textbf{Inputs}\\
\midrule
Abalone & $4\,177$ & $8$ & $3$ & - & Euclidean & Numerical\\
Ecoli & $336$ & $7$ & $8$ & - & Euclidean & Numerical\\
Glass & $214$ & $9$ & $6$ & - & Euclidean & Numerical\\
Breast Cancer (BC) & $286$ & $9$ & $2$ & - & Hamming & Categorical\\
Lung & $32$ & $56$ & $3$ & - & Hamming & Categorical\\
Soybean & $47$ & $35$ & $4$ & - & Hamming & Categorical\\
ERing & $30$ & $4$ & $6$ & $65$ & Euclidean, Soft-DTW & Time series \\
BasicMotions (BM) & $40$ & $6$ & $4$ & $100$ & Euclidean, Soft-DTW & Time series\\
AtrialFibrillation (AF) & $15$ & $2$ & $3$ & $640$ & Euclidean, Soft-DTW & Time series \\
\bottomrule
\end{tabular}
\end{center}
\caption{Experimental dataset characteristics, $n$: number of objects, $p$: number of dimensions and $c$: number of clusters.}
\label{tab:data}
\end{table}

\SECM is implemented in Python and based on PyTorch~\cite{paszke2017automatic} for solving~\eqref{eq:SECMwrtV}.\footnote{Code available: \url{https://gitlab.inria.fr/tguyet/ecm-ts}.} Benchmark algorithms used to evaluate the performance of our algorithm include ECM for numerical data, Evidential C-Means for categorical data (CatECM~\cite{softecm06}), and Evidential C-Medoids (ECMdd~\cite{softecm05}) for the relational format of data. Additionally, we compare our results with a hard clustering approach, specifically K-means and its variants K-modes and TimeSeriesKMeans (TSKmeans~\cite{softecm12}).

Each algorithm was executed 10 times for each of the following parameter settings: $\alpha=2.0$, $\delta=10$, $\beta \in \{1.1, 1. 2, \dots,$ $2.0 \}$, $\lambda \in \{1.0, 2.0, \dots, 10.0 \}$. 
Each run was was stopped upon convergence to a stationary value with $\epsilon=10^{-3}$.
The number of subsets for meta-clusters was limited to $2$ for the sake of interpretability and computation time. For each dataset, the number of clusters corresponds to the prior number of clusters. For credal partitions, the pignistic~\cite{softecm00} transformation was applied to convert them into hard partitions. Euclidean, Soft-DTW and Hamming metrics were used for clustering numerical, temporal and categorical data respectively (see Tab.~\ref{tab:data}). They are all differentiable.

\subsubsection{Analysis of $\lambda$ Parameter}\label{sec:parameters}
In this section, we examine the sensitivity of \SECM results to the new parameter $\lambda$ on each dataset. We use the average normalized specificity as the internal validity index of a credal partition, as defined by~\eqref{eq:normalizedSpecificity}, following the approach proposed in~\cite{softecm00}, where $0 \le N^\star(c) \le 1$. 

\begin{equation}\label{eq:normalizedSpecificity}
N^\star(c) 
= \frac{1}{n \log_2(c)} 
\sum_{i=1}^{n}
  \sum_{A \in 2^{\Omega} \setminus \emptyset} 
    m_i(A) \,\log_2 \lvert A \rvert
\end{equation}

Fig.~\ref{fig:pram_beta_lambda_ri} illustrates the normalized specificity ($N^\star$) computed for \SECM with several combinations of $\lambda$ and $\beta$ values on real-world datasets.  
We can observe that the values of $\lambda$ and $\beta$ have a significant impact. Selecting appropriate values for $\lambda$ on each dataset improves the performance of \SECM. 
These first results show no evident trend in the evolution of $N^\star$ and, at this stage, we recommend a tuning of the $\lambda$ parameter for each dataset, for instance via a grid search. 

\begin{figure}[tb]
\centerline{\includegraphics[width=\columnwidth]{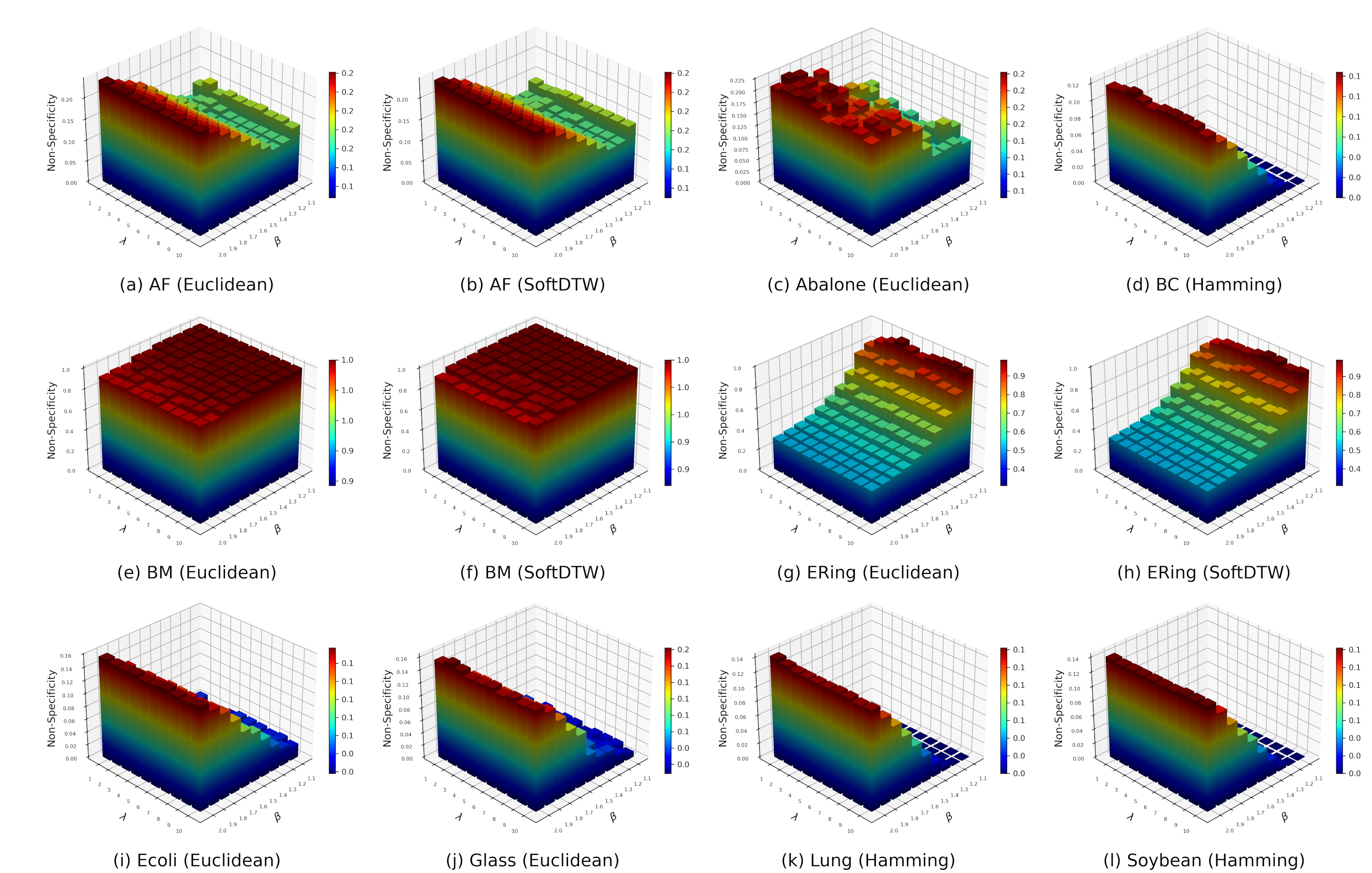}}
\caption{Normalized specificity ($N^*$) of \SECM wrt parameters $\beta$ and $\lambda$ on 9 real-world datasets.}
\label{fig:pram_beta_lambda_ri}
\end{figure}

\subsubsection{Comparison with other Algorithms on Real-World Datasets}\label{sec:comp_donneesreels}
For each dataset, the values of parameters $\beta$ and $\lambda$ that minimize the non-specificity ($N^\star$) are selected for the credal partitions. Next, we compute the mean and standard deviation ($\sigma$) of two standard performance measures ---accuracy and Rand Index (RI)~\cite{softecm19}--- across 10~executions. 

Tab.~\ref{tab:resultats_ri} shows the performances obtained for each algorithm on each dataset. The accuracy is calculated using a prior known class of data. 
Indeed, for each algorithm and each dataset, we calculated the mean and standard deviation statistics in mean($\sigma$) format in Tab.~\ref{tab:resultats_ri} (according to parameters defined in Sec.~\ref{sec:expe_setting} and those selected in Sec.~\ref{sec:parameters}).

\begin{table*}[tb]
\centering
\resizebox{\linewidth}{!}{%
\begin{tabular}{lccccccc|}
    \toprule
     & \rotatebox[origin=c]{30}{\textbf{ECM}} 
     & \rotatebox[origin=c]{30}{\textbf{ECMdd}} 
     & \rotatebox[origin=c]{30}{\textbf{CatECM}} 
     & \rotatebox[origin=c]{30}{\textbf{\SECM}} 
     & \rotatebox[origin=c]{30}{\textbf{K-Means}} 
     & \rotatebox[origin=c]{30}{\textbf{K-Modes}} 
     & \rotatebox[origin=c]{30}{\textbf{TSKmeans}} \\
    \midrule
    Abalone & \textbf{0.58 (0.00)} & 0.58 (0.02) & - & 0.57 (0.03) & 0.55 (0.02) & - & - \\
    Ecoli & 0.89 (0.004) & 0.83 (0.05) & - & 0.81 (0.06) & \textbf{0.89 (0.02)} & - & - \\
    Glass & 0.64 (0.05) & 0.63 (0.06) & - & \textbf{0.66 (0.005)} & 0.64 (0.03) & - & - \\
    BC & - & 0.58 (0.00) & 0.58 (0.00) & \textbf{0.58 (0.01)} & - & 0.58 (0.00) & - \\
    Lung & - & 0.52 (0.06) & 0.55 (0.06) & 0.48 (0.12) & - & \textbf{0.60 (0.00)} & - \\
    Soybean & - & 0.79 (0.11) & 0.91 (0.06) & \textbf{0.91 (0.02)} & - & 0.91 (0.12) & - \\
    ERing (Eucl.) & - & 0.85 (0.04) & - & 0.78 (0.04) & - & - & \textbf{0.91 (0.02)} \\
    BM (Eucl.) & - & 0.48 (0.10) & - & \textbf{0.57 (0.09)} & - & - & 0.43 (0.05) \\
    AF (Eucl.) & - & 0.42 (0.04) & - & 0.50 (0.07) & - & - & \textbf{0.40 (0.05)} \\
    ERing (SoftDTW) & - & \textbf{0.90 (0.03)} & - & 0.76 (0.02) & - & - & 0.86 (0.03)  \\
    BM (SoftDTW) & - & 0.56 (0.05) & - & \textbf{0.59 (0.09)} & - & - & 0.41 (0.04) \\
    AF (SoftDTW) & - & 0.47 (0.08) & - & \textbf{0.53 (0.09)} & - & - & 0.40 (0.01) \\
    \bottomrule
\end{tabular}
\begin{tabular}{ccccccc}
    \toprule \rotatebox[origin=c]{30}{\textbf{ECM}} 
     & \rotatebox[origin=c]{30}{\textbf{ECMdd}} 
     & \rotatebox[origin=c]{30}{\textbf{CatECM}} 
     & \rotatebox[origin=c]{30}{\textbf{\SECM}} 
     & \rotatebox[origin=c]{30}{\textbf{K-Means}} 
     & \rotatebox[origin=c]{30}{\textbf{K-Modes}} 
     & \rotatebox[origin=c]{30}{\textbf{TSKmeans}} \\
    \midrule 
    0.51 (0.00) & 0.50 (0.01) & - & \textbf{0.52 (0.02)} & 0.49 (0.03) & - & - \\
    0.82 (0.002) & 0.74 (0.05) & - & 0.71 (0.07) & \textbf{0.82 (0.03)} & - & - \\
    0.55 (0.04) &  0.54 (0.05) & - & \textbf{0.59 (0.002)} & 0.57 (0.02) & - & - \\
    - & 0.70 (0.00) & 0.70 (0.00) & \textbf{0.71 (0.01)} & - & 0.70 (0.00) & - \\
     - & 0.50 (0.03) & 0.49 (0.05) & 0.47 (0.07) & - & \textbf{0.53 (0.00)} & - \\
     - & 0.70 (0.10) & 0.87 (0.10) & 0.70 (0.12) & - & \textbf{1.00 (0.00)} & - \\
     - & 0.67 (0.08) & - & 0.56 (0.08) & - & - & \textbf{0.81 (0.05)} \\
     - & \textbf{0.47 (0.02)} & - & 0.41 (0.05) & - & - & 0.40 (0.02) \\
     - & 0.47 (0.02) & - & \textbf{0.51 (0.08)} & - & - & 0.46 (0.04) \\
     - & 0.70 (0.07) & - & 0.55 (0.03) & - & - & \textbf{0.80 (0.06)} \\
     - & 0.47 (0.06) & - & \textbf{0.47 (0.07)} & - & - & 0.40 (0.03) \\
     - & 0.49 (0.06) & - & \textbf{0.50 (0.08)} & - & - & 0.47 (0.00) \\
    \bottomrule
\end{tabular}
}
\caption{Clustering average performance (\textit{Rand Index} on the left and \textit{Accuracy} on the right) of algorithms on real datasets. Figures in parentheses are standard deviations. Bold numbers indicate best results by dataset. ``\textit{-}'' refers to inappropriate algorithms for a given dataset.}
\label{tab:resultats_ri}
\end{table*}

The results show that \SECM performs well across different data types, both in terms of RI and accuracy. For numerical data, \SECM achieves competitive results, often close to or better than \textit{ECM} and \textit{K-Means}, with an average RI of $56.6\%$ and accuracy of $51.6\%$. However, it shows slightly higher variability in performance, according to $\sigma$. On categorical data, \SECM outperforms \textit{K-Modes} and \textit{CatECM} in many cases, though its performance varies across iterations, with higher $\sigma$ values. For time series datasets, where both Euclidean and Soft-DTW metrics are used, \SECM demonstrates competitive performance, often surpassing \textit{TSKmeans}, especially with Soft-DTW metrics on Accuracy. Overall, \SECM performs well in about $83\%$ of cases across different data types, and proves its versatility across multiple data types and metrics.

In conclusion, this experimentation confirms that \SECM achieves comparable or superior performance to the benchmark algorithms used in this comparative experimentation. 
This is beyond the result of the Proposition~\ref{res:equivalence}, demonstrating the usefulness of a flexible problem formulation. 
In addition, it offers the advantage of a better fuzzy representation in uncertain data contexts.

\subsection{Experiments with Time Series Data}\label{sec:expe_timeseries}
Regarding time series datasets, a Euclidean metric and a semi-metric, the Dynamic Time Warping (DTW), are experimented. DTW is particularly useful for datasets with time series of different lengths or where there is temporal flexibility between instances of the same class. 
Since DTW is not a metric, new approaches have adapted clustering algorithms to DTW~\cite{petitjean2014dynamic}. However, to our knowledge, no method performs credal partitioning of time series using DTW.

In this section, we present the results of two experiments obtained on artificial time series analysis. They illustrate the relevance and ability of our algorithm to perform fuzzy partitioning of time series using DTW, or more specifically Soft-DTW~\cite{softecm04}, which is its differentiable version.

In a first experiment, we use classic synthetic time series, named \textit{Cylinder-Bell-Funnel}, illustrated in Fig.~\ref{fig:ts_expe2}. The dataset includes $50$ items of each class. These items are grouped according to index: from $1$ to $50$, from $51$ to $100$ and from $101$ to $150$, to ease interpretation of the mass plots. 
To illustrate the flexibility of \SECM, we applied it with two dissimilarity measures: the $L_2$ metric and the Soft-DTW semi-metric. 

The results are shown in Fig.~\ref{fig:ts_expe1}. The mass plots illustrate the expected partitioning for this dataset with $c=3$. When we use the $L_2$ metric, the algorithm fails to identify the correct clusters. Indeed, the examples are assigned equal masses for the three clusters, whatever the true behavior of the time series. In contrast, with Soft-DTW, each expected cluster is mainly assigned to a singleton cluster, representing a better clustering.
This experiment demonstrates that our algorithm effectively addresses the problem of fuzzy partitioning of time series, exploiting the advantages of Soft-DTW. 

\begin{figure}[tb]
\centering 
\includegraphics[width=0.42\textwidth, trim= 30 3 55 20, clip]{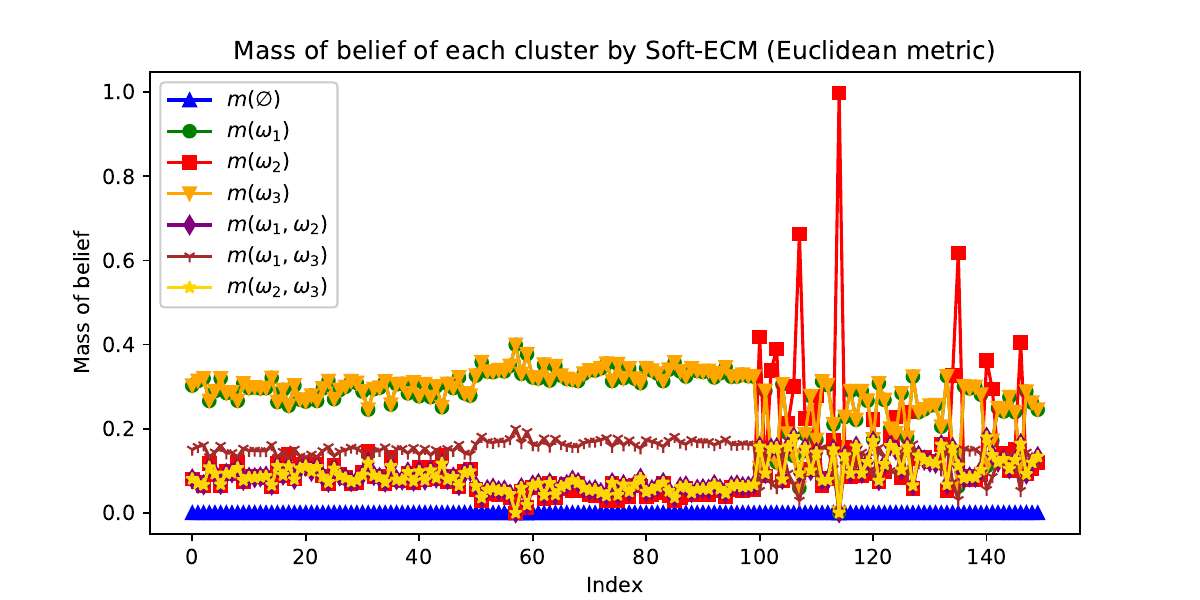}
\includegraphics[width=0.42\textwidth, trim= 30 3 55 20, clip]{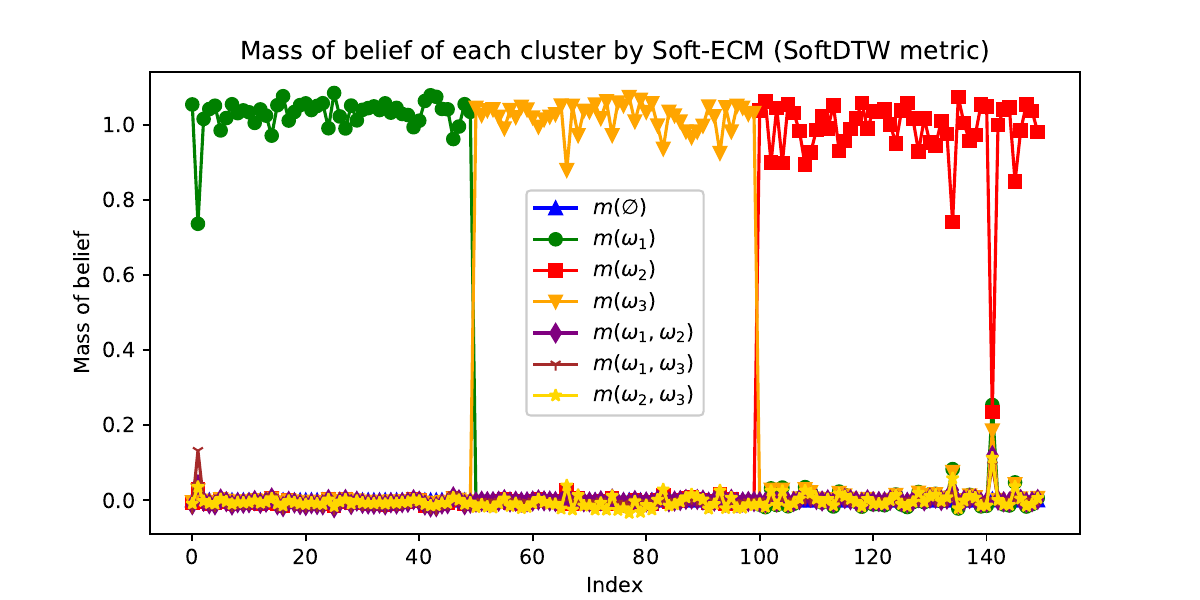}
\caption{On top, plot of mass obtained using $L_2$ distance. On bottom, mass graph obtained by Soft-DTW.}
\label{fig:ts_expe1}
\end{figure}

In a second experiment, we aim to demonstrate the advantages of fuzzy clustering for these time series. The previous task can be easily addressed using the DBA (DTW Barycenter Averaging) algorithm~\cite{petitjean2014dynamic}, but would not consider situations where time series could be fuzzily assigned to several classes because they mix two behaviors. 
In this experiment, we therefore used class \textit{Bell}, class \textit{Funnel} and a new class constructed by combining series \textit{Bell}+\textit{Funnel} ($M$-shaped time series).

\begin{figure}[tb]
\centering
\includegraphics[width=0.4\textwidth, trim= 30 3 55 20, clip]{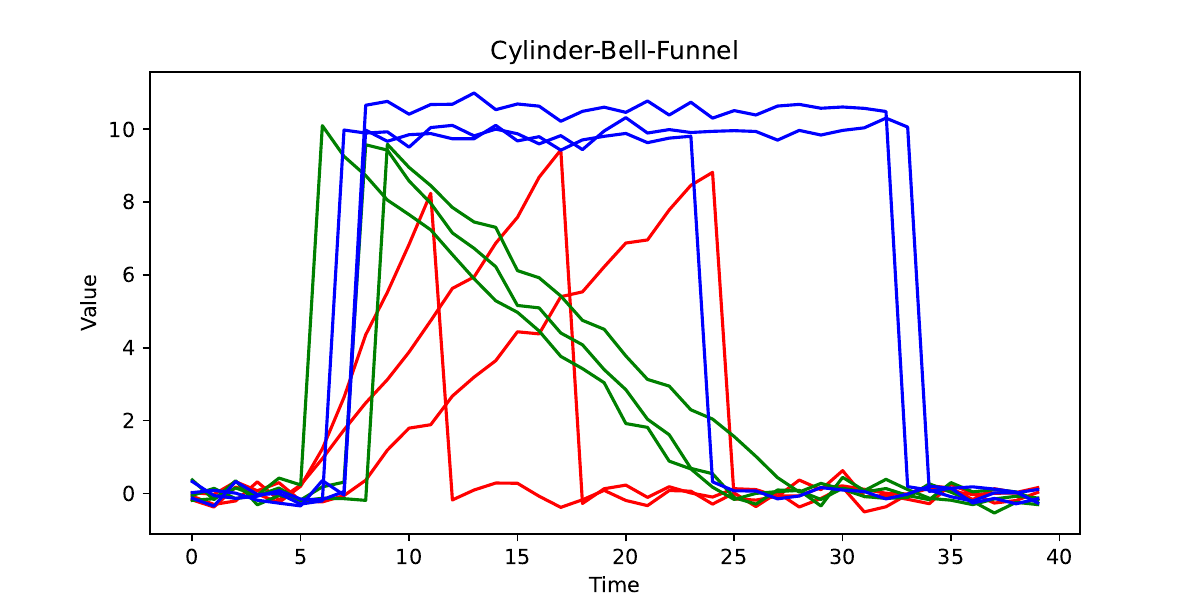}
\includegraphics[width=0.4\textwidth, trim= 30 3 55 20, clip]{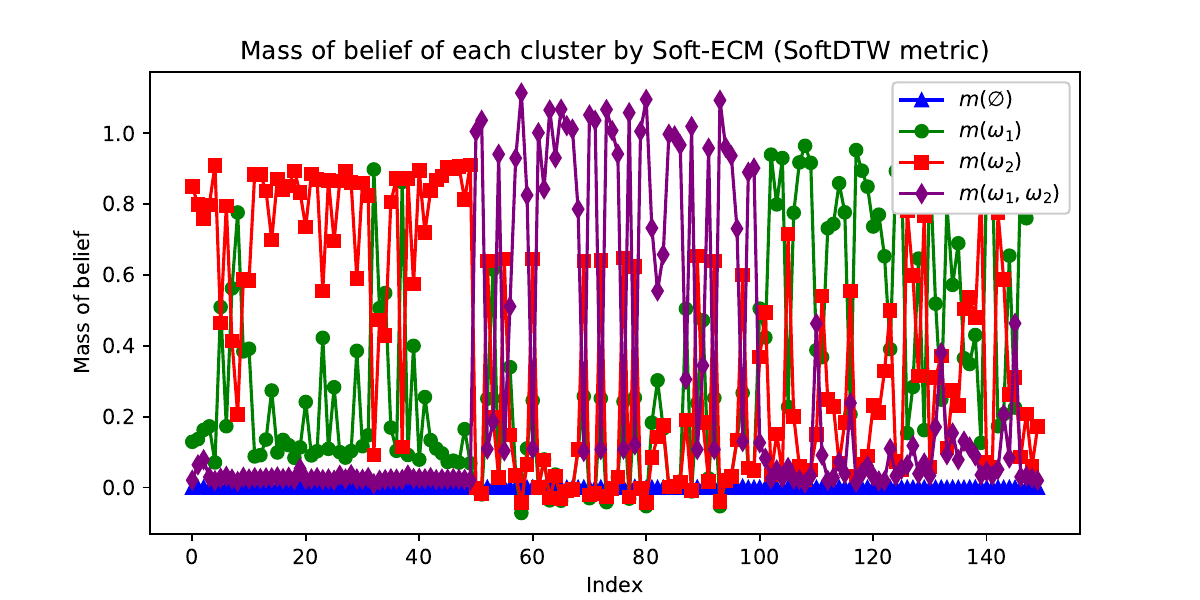}
\caption{Illustration of the three-cluster raw-data on top: \textit{Cylinder} (3 series in blue), \textit{Bell} (3 series in green) and \textit{Funnel} (3 series in red). On bottom, mass plot obtained by clustering with Soft-DTW for 2 classes (\textit{Bell} and \textit{Funnel}) and mixed examples (\textit{Bell+Funnel}).}
\label{fig:ts_expe2}
\end{figure}

Fig.~\ref{fig:ts_expe2} depicts the results obtained by \SECM clustering with $c=2$ and a Soft-DTW metric. We can see that the algorithm identifies two classes (here, the \textit{Bell} and \textit{Bell+Funnel} classes), and the third class is identified as a mixture of the first two. 
For other initializations, the algorithm identified sometimes \textit{Bell} and \textit{Funnel} as the two main clusters. These two results are acceptable for this synthetic data. Of particular interest is to show the ability of the model to identify examples as mixtures of the other two. This demonstrates that \SECM is able to identify meaningful meta-clusters of time series.

\section{Conclusion}
In this paper, we have proposed a generalization of the Evidential C-Means algorithm, named \SECM, to provide a fuzzy clustering solution adapted to complex data and not necessarily defined in a Euclidean space. Experimental results show that \SECM is able to produce comparable or even superior performance to benchmark algorithms.
Furthermore, we have shown the capability of \SECM to effectively achieve a fuzzy clustering with new types of data and compared a semi-metric, such as time series with a Dynamic Time Warping.
In addition, a major advantage for uncertain data analysis is its ability to offer a more effective representation of fuzziness. 
Although this extension introduces new hyperparameters and is based on a numerical solution scheme, the flexibility provided by \SECM in the use of a semi-metric and the improved computation of meta-cluster barycenters highlight its relevance for a wide scope of problems. 
However, the sensitivity to the parameter $\lambda$ remains a limitation of the method. 
We now plan to improve the relaxation of the problem using bi-level optimization. In addition, an extension of experiments to other complex datasets are development directions for further evaluation of the robustness and efficiency of the algorithm.

\section*{Acknowledgements}
V. Antoine and A. Soubeiga acknowledge the support received from the Agence Nationale de la Recherche of the French government through the program 16-IDEX-0001 CAP 20-25.

\newcommand{\etalchar}[1]{$^{#1}$}

\end{document}